\DeclareMathOperator*{\Exp}{\mathbf{E}}
\DeclareMathOperator{\Regret}{Regret}
\newcommand{\field}[1]{\mathbb{#1}}
\newcommand{\R}{\field{R}}
\newcommand{\Var}{\mathrm{Var}}
\newtheorem{theorem}{Theorem}
\newtheorem{lemma}[theorem]{Lemma}
\newtheorem{corollary}[theorem]{Corollary}
\begin{document}

\title{Optimal Non-Asymptotic Lower Bound \\ on the \\ Minimax Regret of Learning with Expert Advice}
\author{
\begin{tabular}{c@{\hskip 1in}c}
Francesco Orabona & David Pal \\
francesco@orabona.com & dpal@yahoo-inc.com \\
\end{tabular}
\\\\
Yahoo Labs \\
New York, NY, USA
}

\maketitle

\begin{abstract}
We prove non-asymptotic lower bounds on the expectation of the maximum of $d$
independent Gaussian variables and the expectation of the maximum of $d$
independent symmetric random walks. Both lower bounds recover the optimal
leading constant in the limit.  A simple application of the lower bound for
random walks is an (asymptotically optimal) non-asymptotic lower bound on the minimax regret of online
learning with expert advice.
\end{abstract}

\section{Introduction}

Let $X_1, X_2, \dots, X_d$ be i.i.d. Gaussian random variables $N(0,\sigma^2)$.
It easy to prove that (see Appendix~\ref{section:upper-bounds})
\begin{equation}
\label{equation:upper-bound-on-maximum-of-gaussians}
\Exp \left[ \max_{1 \le i \le d} X_i \right] \le \sigma \sqrt{2 \ln d} \qquad \text{for any $d \ge 1$} \; .
\end{equation}
It is also well known that
\begin{equation}
\label{equation:limit-maximum-of-gaussians}
\lim_{d \to \infty} \frac{\Exp \left[ \max_{1 \le i \le d} X_i \right]}{\sigma \sqrt{2 \ln d}} = 1 \; .
\end{equation}
In section~\ref{section:maximum-of-gaussians}, we prove a non-asymptotic
$\Omega(\sigma \sqrt{\log d})$ lower bound on $\Exp[\max_{1 \le i \le d} X_i]$.
The leading term of the lower bound is asymptotically $\sqrt{2 \ln d}$. In
other words, the lower bound
implies~\eqref{equation:limit-maximum-of-gaussians}.

Discrete analog of a Gaussian random variable is the symmetric random walk.
Recall that a random walk $Z^{(n)}$ of length $n$ is a sum $Z^{(n)} = Y_1 + Y_2
+ \dots + Y_n$ of $n$ i.i.d. Rademacher variables, which have probability
distribution $\Pr[Y_i = +1] = \Pr[Y_i = -1] = 1/2$. We consider $d$ independent
symmetric random walks $Z^{(n)}_1, Z^{(n)}_2, \dots, Z^{(n)}_d$ of length $n$.
Analogously to \eqref{equation:upper-bound-on-maximum-of-gaussians}, it is easy
to prove that (see Appendix~\ref{section:upper-bounds})
\begin{equation}
\label{equation:upper-bound-on-maximum-of-random-walks}
\Exp \left[ \max_{1 \le i \le d} Z^{(n)}_i \right] \le \sqrt{2 n \ln d} \qquad \text{for any $n \ge 0$ and any $d \ge 1$}\; .
\end{equation}
Note that $\sigma^2$ in \eqref{equation:upper-bound-on-maximum-of-gaussians} is
replaced by $\Var(Z^{(n)}_i) = n$. By central limit theorem
$\frac{Z^{(n)}_i}{\sqrt{n}}$ as $n \to \infty$ converges in distribution to
$N(0,1)$. From this fact, it possible to prove the analog of
\eqref{equation:limit-maximum-of-gaussians},
\begin{equation}
\label{equation:limit-maximum-of-random-walks}
\lim_{d \to \infty} \lim_{n \to \infty} \frac{\Exp\left[ \max_{1 \le i \le d} Z^{(n)}_i \right]}{\sqrt{2 n \ln d}} = 1 \; .
\end{equation}
We prove a non-asymptotic $\Omega(\sqrt{n \log d})$ lower bound on $\Exp\left[
\max_{1 \le i \le d} Z^{(n)}_i \right]$.  Same as for the Gaussian case, the
leading term of the lower bound is asymptotically $\sqrt{2 n \ln d}$
matching~\eqref{equation:limit-maximum-of-random-walks}.

In section~\ref{section:experts}, we show a simple application of the lower
bound on $\Exp\left[\max_{1 \le i \le d} Z^{(n)}_i \right]$ to the problem of
learning with expert advice.  This problem was extensively studied in the
online learning literature; see~\citep{Cesa-BianchiL06}.  Our bound is optimal
in the sense that for large $d$ and large $n$ it recovers the right leading
constant.

\section{Maximum of Gaussians}
\label{section:maximum-of-gaussians}

Crucial step towards lower bounding $\Exp \left[ \max_{1 \le i \le d} X_i
\right]$ is a good lower bound on the tail $\Pr[X_i \ge x]$ of a single
Gaussian. The standard way of deriving such bounds is via bounds on the so-called
Mill's ratio.  Mill's ratio of a random variable $X$ with density function
$f(x)$ is the ratio $\frac{\Pr[X > x]}{f(x)}$.\footnote{Mill's ratio has
applications in economics. A simple is problem where Mill's ratio shows up is
the problem of setting optimal price for a product.  Given a distribution
prices that customers are willing to pay, the goal is to choose the price that
brings the most revenue.} It clear that a lower bound on the Mill's ratio
yields a lower bound on the tail $\Pr[X > x]$.

Without loss of generality it suffices to lower bound the Mill's ratio of $N(0,1)$,
since Mill's ratio of $N(0,\sigma^2)$ can be obtained by rescaling.  Recall that
probability density of $N(0,1)$ is $\phi(x) = \frac{1}{\sqrt{2 \pi}}
e^{-x^2/2}$ and its cumulative distribution function is $\Phi(x) =
\int_{-\infty}^x \frac{1}{\sqrt{2 \pi}} e^{-t^2/2} dt$.  The Mill's ratio for
$N(0,1)$ can be expressed as $\frac{1 - \Phi(x)}{\phi(x)}$. A lower bound on
Mill's ratio of $N(0,1)$ was proved by~\cite{Boyd-1959}.
\begin{lemma}[Mill's ratio for standard Gaussian~\citep{Boyd-1959}]
\label{lemma:boyd}
For any $x \ge 0$,
$$
\frac{1 - \Phi(x)}{\phi(x)}
= \exp\left(\frac{x^2}{2}\right) \int_x^{\infty} \exp\left(-\frac{t^2}{2}\right) dt
\ge \frac{\pi}{(\pi-1)x+\sqrt{x^2+2 \pi}}
\ge \frac{\pi}{\pi x + \sqrt{2\pi}} \; .
$$
\end{lemma}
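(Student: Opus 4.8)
The plan is to dispatch the rightmost inequality immediately and then concentrate all the work on the first one. The inequality $\frac{\pi}{(\pi-1)x+\sqrt{x^2+2\pi}}\ge\frac{\pi}{\pi x+\sqrt{2\pi}}$ is equivalent, after cross-multiplying the positive denominators, to $(\pi-1)x+\sqrt{x^2+2\pi}\le \pi x+\sqrt{2\pi}$, i.e. $\sqrt{x^2+2\pi}\le x+\sqrt{2\pi}$; squaring the nonnegative sides and cancelling gives $0\le 2x\sqrt{2\pi}$, which holds for every $x\ge 0$. So it remains to prove the first inequality. Writing $R(x)=e^{x^2/2}\int_x^\infty e^{-t^2/2}\,dt$ for the Mill's ratio and $g(x)=(\pi-1)x+\sqrt{x^2+2\pi}$, the goal becomes $w(x):=R(x)-\pi/g(x)\ge 0$ on $[0,\infty)$. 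The structural fact I would exploit is that differentiating the expression for $R$ (Leibniz rule on $\int_x^\infty$) gives the linear ODE $R'(x)=xR(x)-1$. I would first record the two boundary facts that make the bound plausible and drive the argument: at the left end $R(0)=\sqrt{\pi/2}=\pi/\sqrt{2\pi}=\pi/g(0)$, so $w(0)=0$, and at the right end both $R(x)$ and $\pi/g(x)$ are asymptotic to $1/x$, so $w(x)\to 0$. Thus the candidate bound touches $R$ at both ends of the interval.

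Next I would convert the ODE into an integral representation of $w$. Setting $E(x):=x\,\tfrac{\pi}{g(x)}-\big(\tfrac{\pi}{g(x)}\big)'-1=\dfrac{N(x)}{g(x)^2}$ with $N(x):=\pi x\,g(x)+\pi g'(x)-g(x)^2$, the linear ODE gives $w'-xw=E$. Multiplying by the integrating factor $e^{-x^2/2}$ and integrating from $x$ to $\infty$ (using $w(t)e^{-t^2/2}\to 0$) yields
\[
  w(x)=-e^{x^2/2}\int_x^\infty E(t)\,e^{-t^2/2}\,dt=:-e^{x^2/2}\,I(x).
\]
Hence everything reduces to showing $I(x)\le 0$ for all $x\ge 0$. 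The two endpoint facts translate into $I(\infty)=0$ (since $E\,e^{-t^2/2}$ is integrable) and $I(0)=-w(0)=0$.

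The heart of the argument, and the step I expect to be hardest, is the sign analysis of $I$. Note $I'(x)=-E(x)e^{-x^2/2}$, so the monotonicity of $I$ is controlled by the sign of $N$. A natural but doomed first attempt is to prove $N\ge 0$ everywhere (which would make $\pi/g$ a clean super/subsolution and let a one-line comparison argument finish); this fails because $N(0)=\pi(\pi-3)>0$ while $N(x)\sim \tfrac{\pi^2(\pi-4)}{2x^2}<0$ for large $x$, so $N$ genuinely changes sign. The fix is to show $N$ changes sign \emph{exactly once}, from $+$ to $-$. Writing $r=\sqrt{x^2+2\pi}$ and simplifying $N=(\pi-2)x(x-r)+\pi(\pi-3)+\pi x/r$, I would clear the radical so that $N\gtrless 0$ becomes a comparison of two manifestly nonnegative quantities; squaring then collapses their difference to an affine function of $x^2$ with negative slope and positive intercept, which has a single positive root.

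Granting the single crossing, $I'$ is negative then positive, so $I$ decreases and then increases; since its two equal endpoint values $I(0)$ and $I(\infty)$ are both $0$, the valley-shaped $I$ lies below them, i.e. $I\le 0$ throughout, whence $w=-e^{x^2/2}I\ge 0$, which is exactly the claimed first inequality. The main obstacle is therefore entirely concentrated in the exact sign-change count of $N$: the reduction of $N\gtrless 0$ to an affine function of $x^2$ is the calculation on which the whole proof rests, and it is what rescues the otherwise-tempting but incorrect pointwise comparison.
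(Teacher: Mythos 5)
Your proof is correct, but it takes a genuinely different (and much more ambitious) route than the paper. For the second inequality, your cross-multiplication argument coincides exactly with the paper's own one-line proof, which phrases it as $a+b\ge\sqrt{a^2+b^2}$ with $a=\sqrt{2\pi}$, $b=x$. For the first inequality, however, the paper gives no proof at all: it simply cites \citet{Boyd-1959} and takes the bound as known, so your ODE/integral-representation argument is a self-contained replacement for that citation. I checked its crucial computation and it does go through: with $r=\sqrt{x^2+2\pi}$ one indeed has $N(x)=(\pi-2)x(x-r)+\pi(\pi-3)+\pi x/r$, and multiplying by $r$ gives $rN(x)=r\bigl[(\pi-2)x^2+\pi(\pi-3)\bigr]-x\bigl[(\pi-2)x^2+\pi(2\pi-5)\bigr]$, a difference of two nonnegative quantities (using $\pi>3$ and $2\pi>5$); in the difference of their squares the $x^6$ and $x^4$ terms cancel identically, leaving $\pi^2(\pi-2)(\pi-4)x^2+2\pi^3(\pi-3)^2$, which is affine in $x^2$ with negative slope (as $\pi<4$) and positive intercept, so $N$ changes sign exactly once, from positive to negative. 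Combined with $w(0)=0$, $w(x)\to 0$ as $x\to\infty$, and $I'(x)=-N(x)e^{-x^2/2}/g(x)^2$, the valley-shaped $I$ with zero boundary values is nonpositive, and the bound follows. What the paper's route buys is brevity and reliance on an established reference; what yours buys is a fully self-contained lemma, plus a reusable template for proving Mill's-ratio bounds of this type: a candidate bound that touches the Mill's ratio at both ends of $[0,\infty)$, converted by the integrating factor $e^{-x^2/2}$ into a statement about a single sign change of an algebraic function.
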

The second inequality in Lemma~\ref{lemma:boyd} is our simplification of Boyd's
bound.  It follows by setting $a=\sqrt{2 \pi}$ and $b=x$. By a simple algebra
it is equivalent to the inequality $a + b \ge \sqrt{a^2 + b^2}$ which holds for
any $a,b \ge 0$.
\begin{corollary}[Lower Bound on Gaussian Tail]
Let $X \sim N(0, \sigma^2)$ and $x \ge 0$. Then,
$$
\Pr[X \ge x] \ge \exp\left(-\frac{x^2}{2 \sigma^2}\right) \frac{1}{\sqrt{2\pi}\frac{x}{\sigma}+2}.
$$
\end{corollary}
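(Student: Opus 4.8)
The plan is to start directly from the second inequality of Lemma~\ref{lemma:boyd} and push it through two standard transformations: first convert the Mill's-ratio bound into a tail bound for the standard Gaussian, then rescale to obtain the bound for $N(0,\sigma^2)$.

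First I would treat the case $\sigma = 1$. Writing $Z \sim N(0,1)$, the tail is $\Pr[Z \ge x] = 1 - \Phi(x)$, so multiplying both sides of $\frac{1-\Phi(x)}{\phi(x)} \ge \frac{\pi}{\pi x + \sqrt{2\pi}}$ by the nonnegative density $\phi(x) = \frac{1}{\sqrt{2\pi}} e^{-x^2/2}$ preserves the inequality and gives $\Pr[Z \ge x] \ge \frac{1}{\sqrt{2\pi}} e^{-x^2/2} \cdot \frac{\pi}{\pi x + \sqrt{2\pi}}$. Note that the hypothesis $x \ge 0$ is exactly what Lemma~\ref{lemma:boyd} requires, so this step is legitimate.

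The one genuinely computational step is simplifying the constant factor. Absorbing the $\frac{1}{\sqrt{2\pi}}$ into the fraction, I would verify the identity $\frac{1}{\sqrt{2\pi}} \cdot \frac{\pi}{\pi x + \sqrt{2\pi}} = \frac{1}{\sqrt{2\pi}\,x + 2}$; this follows by multiplying the numerator and denominator of the second factor by $\frac{\sqrt{2\pi}}{\pi}$ and using $\sqrt{2\pi}\cdot\sqrt{2\pi} = 2\pi$. This already yields the claimed bound when $\sigma = 1$, namely $\Pr[Z \ge x] \ge e^{-x^2/2}/(\sqrt{2\pi}\,x + 2)$.

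Finally I would handle general $\sigma$ by rescaling. Since $X \sim N(0,\sigma^2)$ has the same distribution as $\sigma Z$, we have $\Pr[X \ge x] = \Pr[Z \ge x/\sigma]$, and $x/\sigma \ge 0$ because $x \ge 0$ and $\sigma > 0$. Substituting $x/\sigma$ for $x$ in the $\sigma = 1$ bound reproduces the corollary verbatim. I do not expect any real obstacle: every step is either an identity or a monotone operation, so the only things to watch are the algebraic simplification of the constant and the sign conditions ($x \ge 0$ for the applicability of Lemma~\ref{lemma:boyd}, and $\phi(x) \ge 0$ so that the inequality direction is preserved when multiplying through).
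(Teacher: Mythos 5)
Your proof is correct and follows essentially the same route as the paper: both apply the simplified Boyd bound from Lemma~\ref{lemma:boyd} to the standard Gaussian, perform the same algebraic simplification of the constant, and handle general $\sigma$ by rescaling (you via the identity $X \stackrel{d}{=} \sigma Z$, the paper via a change of variable in the tail integral, which is the same thing).
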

\begin{proof}
We have
\begin{align*}
\Pr[X \ge x]
& = \frac{1}{\sigma \sqrt{2 \pi}} \int_x^{\infty} \exp \left( -\frac{t^2}{2 \sigma^2} \right) d t \\
& = \frac{1}{\sqrt{2 \pi}} \int_\frac{x}{\sigma}^{\infty} \exp \left( -\frac{t^2}{2} \right) d t \\
& \ge \frac{1}{\sqrt{2 \pi}} \exp\left(-\frac{x^2}{2 \sigma^2}\right) \frac{\pi}{\pi\frac{x}{\sigma}+\sqrt{2 \pi}} & \text{(by Lemma~\ref{lemma:boyd})} \; .
\end{align*}
\end{proof}

Equipped with the lower bound on the tail, we prove a lower bound on the maximum of Gaussians.
\begin{theorem}[Lower Bound on Maximum of Independent Gaussians]
\label{theorem:maximum-of-gaussians}
Let $X_1, X_2, \dots, X_d$ be independent Gaussian random variables $N(0,\sigma^2)$. For any $d \ge 2$,
\begin{align}
\Exp \left[\max_{1 \le i \le d} X_i\right]
& \ge \sigma \left(1 - \exp\left(-\frac{\sqrt{\ln d}}{6.35}\right)\right) \left(\sqrt{2 \ln d - 2 \ln \ln d} +\sqrt\frac{2}{\pi}\right) -\sqrt{\frac{2}{\pi}} \sigma \label{equation:maximum-of-gaussians-lower-bound-1} \\
& \ge 0.13 \sigma \sqrt{\ln d} - 0.7 \sigma \label{equation:maximum-of-gaussians-lower-bound-2} \; .
\end{align}
\end{theorem}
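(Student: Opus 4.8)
The plan is to reduce to the standard case $\sigma = 1$ (the general statement follows because $\max_i X_i$ scales linearly in $\sigma$), to fix a threshold $t$, and to argue that the maximum exceeds $t$ with probability close to $1$. The natural choice is $t = \sqrt{2\ln d - 2\ln\ln d}$, since then $\exp(-t^2/2) = \ln d / d$, which is precisely the scale that makes the tail computation below collapse to a clean expression. Writing $M = \max_{1\le i\le d} X_i$, the whole argument rests on the elementary decomposition
$$\Exp[M] = \Exp[M \,\mathbf{1}[M \ge t]] + \Exp[M \,\mathbf{1}[M < t]] \ge t \Pr[M \ge t] + \Exp[M \,\mathbf{1}[M < t]],$$
so that I need two estimates: a lower bound on $\Pr[M \ge t]$ that tends to $1$, and a lower bound on the ``below threshold'' contribution that tends to $0$ as $d \to \infty$.

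For the first estimate I would use independence to write $\Pr[M < t] = \Pr[X_1 < t]^d = (1 - \Pr[X_1 \ge t])^d \le \exp(-d\,\Pr[X_1 \ge t])$, and then invoke the Corollary to bound $\Pr[X_1 \ge t]$ from below. With the chosen $t$ this gives $d\,\Pr[X_1 \ge t] \ge \ln d / (\sqrt{2\pi}\,t + 2)$, and since $t = \Theta(\sqrt{\ln d})$ the right-hand side is $\Omega(\sqrt{\ln d})$; the constant $6.35$ in the statement is exactly what makes $d\,\Pr[X_1 \ge t] \ge \sqrt{\ln d}/6.35$ valid for every $d \ge 2$, so that $\Pr[M \ge t] \ge 1 - \exp(-\sqrt{\ln d}/6.35)$.

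For the second estimate the key observation is the pointwise inequality $M \ge X_1 \ge -|X_1|$, which yields $\Exp[M \,\mathbf{1}[M < t]] \ge -\Exp[|X_1|\,\mathbf{1}[M < t]]$. Factoring out the independent coordinate $X_1$ gives $\Exp[|X_1|\,\mathbf{1}[M<t]] = \Exp[|X_1|\,\mathbf{1}[X_1<t]]\,\Pr[X_1<t]^{d-1} \le \sqrt{2/\pi}\,\Pr[X_1<t]^{d-1}$, using $\Exp[|X_1|] = \sqrt{2/\pi}$, and the remaining power of $\Pr[X_1<t]$ is again at most $\exp(-\sqrt{\ln d}/6.35)$. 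Substituting both estimates into the decomposition and collecting terms produces \eqref{equation:maximum-of-gaussians-lower-bound-1}; the coarser bound \eqref{equation:maximum-of-gaussians-lower-bound-2} then follows by lower-bounding the prefactor $1 - \exp(-\sqrt{\ln d}/6.35)$ and discarding the lower-order terms.

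I expect the main obstacle to be not the conceptual structure --- which is merely ``threshold, tail bound, decompose'' --- but the demand that the inequality hold non-asymptotically for all $d \ge 2$. This is what forces the $\ln\ln d$ correction inside the square root and the deliberately loose constant $6.35$ (the asymptotically optimal value is closer to $2\sqrt{\pi}\approx 3.55$), and it turns the final assembly into the verification of an explicit scalar inequality in $d$ across the entire range, including the delicate small-$d$ regime where $\ln\ln d$ is negative and the additive $+2$ in the tail bound is non-negligible. The other place where constants must be managed rather than ideas invented is the bookkeeping between the $d$-th and $(d-1)$-th powers of $\Pr[X_1 < t]$ arising in the second estimate, which the slack built into $6.35$ is designed to absorb.
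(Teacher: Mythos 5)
Your proposal follows the paper's strategy almost exactly: the same threshold $t=\sigma\sqrt{2\ln d-2\ln\ln d}$, the same split of $\Exp[\max_i X_i]$ at $t$, the same $1-x\le e^{-x}$ step, and the same use of the Gaussian tail corollary to get $\Pr[\max_i X_i\ge t]\ge 1-\epsilon$ with $\epsilon=\exp(-\sqrt{\ln d}/6.35)$. The one place you genuinely deviate from the paper is the below-threshold term, and that is exactly where your argument has a gap. You bound $M\ge -|X_1|$ pointwise, factor out the independent coordinates to get
$\Exp\left[M\,\mathbf{1}[M<t]\right]\ge -\Exp\left[|X_1|\,\mathbf{1}[X_1<t]\right]\Pr[X_1<t]^{d-1}\ge -\sigma\sqrt{2/\pi}\,\Pr[X_1<t]^{d-1}$,
and then claim that $\Pr[X_1<t]^{d-1}\le\epsilon$, which is what you need for the assembly to produce exactly \eqref{equation:maximum-of-gaussians-lower-bound-1} $= t(1-\epsilon)-\sigma\sqrt{2/\pi}\,\epsilon$. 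That claim is false for small $d$: the tail estimate controls only the $d$-th power, and passing to the $(d-1)$-th power costs a factor $\frac{d-1}{d}$ in the exponent, which is $\frac{1}{2}$ at $d=2$. Concretely, take $\sigma=1$ and $d=2$: then $t\approx 1.456$, so $\Pr[X_1<t]^{d-1}=\Phi(t)\approx 0.927$, while $\epsilon=\exp(-\sqrt{\ln 2}/6.35)\approx 0.877$; the failure persists at $d=3$. So the slack in $6.35$ does \emph{not} absorb the $d$ versus $d-1$ bookkeeping, contrary to your closing remark. With the best valid substitute, $\Pr[X_1<t]^{d-1}\le\exp\left(-\frac{d-1}{d}\cdot\frac{\sqrt{\ln d}}{6.35}\right)\approx 0.936$ at $d=2$, your final expression is about $-0.57$, strictly below the right-hand side of \eqref{equation:maximum-of-gaussians-lower-bound-1}, which is about $-0.52$. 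The proof as written therefore does not establish the theorem in precisely the small-$d$ regime you identified as delicate.

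The culprit is the replacement $X_1\mapsto -|X_1|$, which throws away the positive part of $X_1$ and is too lossy by one factor of $\Pr[X_1<t]$. The repair is small and instructive, because it is exactly what the paper's conditional-expectation formulation accomplishes. Keep $X_1$ itself: $\Exp\left[M\,\mathbf{1}[M<t]\right]\ge\Exp\left[X_1\,\mathbf{1}[M<t]\right]=\Exp\left[X_1\,\mathbf{1}[X_1<t]\right]\Pr[X_1<t]^{d-1}$, and note that for $t\ge 0$,
$\Exp\left[X_1\,\mathbf{1}[X_1<t]\right]\ge\Exp\left[X_1\,\mathbf{1}[X_1<0]\right]=-\frac{\sigma}{\sqrt{2\pi}}\ge -\sigma\sqrt{\frac{2}{\pi}}\,\Pr[X_1<t]$,
where the last step uses $\Pr[X_1<t]\ge\frac{1}{2}$. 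Multiplying through by $\Pr[X_1<t]^{d-1}$ restores the full $d$-th power: the negative term becomes $-\sigma\sqrt{2/\pi}\,\Pr[X_1<t]^{d}=-\sigma\sqrt{2/\pi}\,\Pr[\overline{A}]\ge -\sigma\sqrt{2/\pi}\,\epsilon$, and \eqref{equation:maximum-of-gaussians-lower-bound-1} follows. The paper gets this for free by writing the term as $\Exp\left[X_1\mid X_1\le t\right]\Pr[\overline{A}]$ and bounding the \emph{normalized} conditional mean by $\Exp\left[X_1\mid X_1\le 0\right]=-\sigma\sqrt{2/\pi}$, so that the small factor multiplying it is $\Pr[\overline{A}]$ itself rather than $\Pr[X_1<t]^{d-1}$.
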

\begin{proof}
Let $A$ be the event that at least one of the $X_i$ is greater than $C \sigma
\sqrt{\ln d}$ where $C = C(d) = \sqrt{2 - \frac{2 \ln \ln d}{\ln d}}$. We
denote by $\overline{A}$ the complement of this event. We have
\begin{align}
\Exp \left[ \max_{1 \le i \le d} X_i \right]
& = \Exp \left[ \max_{1 \le i \le d} X_i ~ \middle|~ A \right] \cdot \Pr[A] + \Exp \left[ \max_{1 \le i \le d} X_i ~\middle|~ \overline{A} \right] \cdot \Pr \left[ \overline{A} \right] \notag \\
& \ge \Exp \left[ \max_{1 \le i \le d} X_i ~\middle|~ A \right] \cdot \Pr[A] + \Exp \left[ X_1 ~\middle|~ \overline{A} \right] \cdot \Pr[\overline{A}] \notag \\
& = \Exp \left[ \max_{1 \le i \le d} X_i~\middle|~ A \right] \cdot \Pr[A] + \Exp \left[ X_1~|~ X_1 \le C \sigma \sqrt{\ln d} \right] \cdot \Pr[\overline{A}] \notag \\
& \ge \Exp \left[ \max_{1 \le i \le d} X_i~\middle|~ A \right] \cdot \Pr[A] + \Exp[X_1~|~ X_1 \le 0] \cdot \Pr[\overline{A}] \notag \\
& \ge \Exp \left[ \max_{1 \le i \le d} X_i~\middle|~ A \right] \cdot \Pr[A] - \sigma \sqrt{\frac{2}{\pi}} \cdot \Pr[\overline{A}] \notag \\
& \ge C \sigma \sqrt{\ln d} \cdot \Pr[A] - \sigma \sqrt{\frac{2}{\pi}} (1 - \Pr[A]) \notag \\
& = \sigma \left(C\sqrt{\ln d} + \sqrt{\frac{2}{\pi}}\right) \Pr[A] -  \sigma \sqrt{\frac{2}{\pi}} \label{equation:maximum-of-gaussians-lower-bound-3}
\end{align}
where we used that $\Exp[X_1 ~|~ X_1 \le 0] = \frac{1}{\Pr[X_1 \le 0]} \int_{-\infty}^0 \frac{x}{\sigma \sqrt{2\pi}} \exp \left(- \frac{x^2}{2\sigma^2} \right) = - \sigma \sqrt{\frac{2}{\pi}}$.

It remains to lower bound $\Pr[A]$, which we do as follows
\begin{align}
\Pr[A]
& = 1 - \Pr[\overline{A}] \notag \\
& = 1 - \left( \Pr \left[ X_1 \le C \sigma \sqrt{\ln d} \right] \right)^d  \notag \\
& = 1 - \left(1 - \Pr\left[ X_1 > C \sigma \sqrt{\ln d} \right] \right)^d \notag \\
& \ge 1 - \exp\left(-d \cdot \Pr\left[X_1 \ge C \sigma \sqrt{\ln d} \right]\right) \notag \\
& \ge 1 - \exp\left(-d \exp\left(-\frac{C^2 \ln d}{2}\right) \frac{1}{\sqrt{2\pi}C \sqrt{\ln d}+2} \right) \notag \\
& = 1 - \exp\left(-\frac{d^{1-\frac{C^2}{2}}}{C \sqrt{2\pi \ln d}+2}\right) \label{equation:maximum-of-gaussians-lower-bound-4} \; .
\end{align}
where in the first inequality we used the elementary inequality $1 - x \le \exp(-x)$ valid for all $x \in \R$.

Since $C = \sqrt{2 - \frac{2 \ln \ln d}{\ln d}}$ we have $d^{1-\frac{C^2}{2}} = \ln d$. Substituting this into \eqref{equation:maximum-of-gaussians-lower-bound-4}, we get
\begin{equation}
\label{equation-maximum-of-gaussians-lower-bound-5}
\Pr[A] \ge 1 - \exp\left(-\frac{\ln d}{C \sqrt{2\pi \ln d}+2}\right) = 1 - \exp\left(-\frac{\sqrt{\ln d}}{C \sqrt{2\pi}+2}\right) \; .
\end{equation}
The function $C(d)$ is decreasing on the interval $[1,e^e]$, increasing on $[e^e, \infty)$, and $\lim_{d \to \infty} C(d) = \sqrt{2}$. From these properties
we can deduce that $C(d) \le \max\{C(2), \sqrt{2}\} \le 1.75$ for any $d \in [2,\infty)$. Therefore, $C\sqrt{2 \pi} + 2 \le 6.35$ and hence
\begin{equation}
\label{equation:maximum-of-gaussians-lower-bound-6}
\Pr[A] \ge 1 - \exp\left(-\frac{\sqrt{\ln d}}{6.35}\right) \; .
\end{equation}
Inequalities \eqref{equation:maximum-of-gaussians-lower-bound-3} and \eqref{equation:maximum-of-gaussians-lower-bound-6} together imply bound \eqref{equation:maximum-of-gaussians-lower-bound-1}.
Bound \eqref{equation:maximum-of-gaussians-lower-bound-2} is obtained from \eqref{equation:maximum-of-gaussians-lower-bound-1} by noticing that
\begin{align*}
& \sigma \left(1 - \exp\left(-\frac{\sqrt{\ln d}}{6.35}\right)\right) \left(\sqrt{2 \ln d - 2 \ln \ln d} +\sqrt\frac{2}{\pi}\right) -\sqrt{\frac{2}{\pi}} \sigma \\
& = \sigma \left(1 - \exp\left(-\frac{\sqrt{\ln d}}{6.35}\right)\right) \sqrt{2 \ln d - 2 \ln \ln d} - \exp\left(-\frac{\sqrt{\ln d}}{6.35}\right) \sqrt{\frac{2}{\pi}} \sigma \\
& \ge 0.1227 \cdot \sigma \sqrt{2 \ln d - 2 \ln \ln d} - 0.7 \sigma \\
& = 0.1227 \cdot \sigma \sqrt{\ln d} \cdot C(d) - 0.7 \sigma
\end{align*}
where we used that $\exp\left(-\frac{\sqrt{\ln d}}{6.35}\right) \le 0.8773$ for any $d \ge 2$.
Since $C(d)$ has minimum at $d = e^e$, it follows that $C(d) \ge C(e^e) = \sqrt{2 - \frac{2}{e}} \ge 1.1243$ for any $d \ge 2$.
\end{proof}

\section{Maximum of Random Walks}
\label{section:maximum-of-random-walks}

The general strategy for proving a lower bound on $\Exp\left[\max_{1 \le i \le d} Z^{(n)}_i \right]$
 is the same as in the previous section. The main task it to lower bound the
tail $\Pr[Z^{(n)} \ge x]$ of a symmetric random walk $Z^{(n)}$ of length
$n$.  Note that
$$
B_n = \frac{Z^{(n)} + n}{2}
$$
is a Binomial random variable $B(n,\frac{1}{2})$. We follow the same approach used in~\cite{nOrabona13}.
First we lower bound the tail $\Pr[B_n \ge k]$ with~\citet[Theorem 2]{McKay1989}.
\begin{lemma}[Bound on Binomial Tail]
\label{lemma:mckay}
Let $n,k$ be integers satisfying $n \ge 1$ and $\frac{n}{2} \le k \le n$. Define $x = \frac{2k - n}{\sqrt{n}}$. Then, $B_n \sim B(n,\frac{1}{2})$ satisfies
$$
\Pr \left[ B_n \ge  k \right] \ge \sqrt{n} \binom{n-1}{k-1} 2^{-n}
 \; \frac{1 - \Phi(x)}{\phi(x)} \; .
$$
\end{lemma}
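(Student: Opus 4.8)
Write $M(x) = \frac{1-\Phi(x)}{\phi(x)}$ for the Mills ratio and $x_k = \frac{2k-n}{\sqrt n}$, so the claim reads $\Pr[B_n \ge k] \ge \sqrt n \binom{n-1}{k-1} 2^{-n} M(x_k)$. The plan is a downward induction on $k$, from $k=n$ down to $k = \lceil n/2 \rceil$, using only the elementary tail recursion $\Pr[B_n \ge k] = \Pr[B_n \ge k+1] + \binom{n}{k} 2^{-n}$. Note that $x_k \ge 0$ throughout, since $k \ge n/2$. For the base case $k=n$ the left side equals $2^{-n}$ and the claim reduces to $1 \ge \sqrt n\, M(\sqrt n)$, which follows from the crude bound $M(t) \le 1/t$ (immediate from $M(t) = e^{t^2/2}\int_t^\infty e^{-s^2/2}\,ds \le e^{t^2/2}\int_t^\infty \tfrac{s}{t} e^{-s^2/2}\,ds$).

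For the inductive step, assume the bound holds for $k+1$. Insert the inductive hypothesis into the recursion and divide through by $\binom{n-1}{k-1} 2^{-n}$, using $\binom{n-1}{k} = \frac{n-k}{k}\binom{n-1}{k-1}$ and $\binom{n}{k} = \frac{n}{k}\binom{n-1}{k-1}$. Writing $x = x_k$, $\delta = x_{k+1}-x_k = 2/\sqrt n$, and $k = \tfrac{\sqrt n(\sqrt n + x)}{2}$, $n-k = \tfrac{\sqrt n(\sqrt n - x)}{2}$, the target inequality collapses after elementary algebra to the single statement
$$M(x+\delta) - M(x) + \delta \ \ge\ \frac{x\delta}{2}\bigl(M(x) + M(x+\delta)\bigr),$$
to be verified for every $x \ge 0$ with $\delta = 2/\sqrt n$. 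This is the heart of the argument and is where essentially all of the work lies.

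To prove this inequality I would exploit the Mills-ratio ODE $M'(t) = tM(t) - 1$, which gives $\int_x^{x+\delta} t M(t)\,dt = M(x+\delta) - M(x) + \delta$; the inequality is therefore equivalent to $\int_x^{x+\delta} t M(t)\,dt \ge \frac{x\delta}{2}\bigl(M(x)+M(x+\delta)\bigr)$. Since the trapezoidal rule underestimates the integral of a concave function, it suffices to show that $t \mapsto tM(t)$ is concave on $[0,\infty)$: then $\int_x^{x+\delta} tM(t)\,dt \ge \frac{\delta}{2}\bigl(xM(x) + (x+\delta)M(x+\delta)\bigr) \ge \frac{x\delta}{2}\bigl(M(x)+M(x+\delta)\bigr)$, the last step using $\delta \ge 0$ and $M \ge 0$. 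Differentiating twice with $M' = tM-1$ gives $(tM)'' = tM(t)(t^2+3) - (t^2+2)$, so the required concavity is exactly the Mills-ratio upper bound $M(t) \le \frac{t^2+2}{t(t^2+3)}$.

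The main obstacle is this last bound. Unlike $M(t) \le 1/t$, it is tight to two orders as $t \to \infty$, so no crude estimate will do. I would obtain it from the classical continued-fraction expansion $M(t) = \cfrac{1}{t + \cfrac{1}{t + \cfrac{2}{t + \cdots}}}$, whose third convergent is precisely $\frac{t^2+2}{t(t^2+3)}$ and which, the convergents alternately over- and under-estimating $M$, yields the needed upper bound; alternatively one proves $M(t) \le \frac{t^2+2}{t(t^2+3)}$ directly by a Riccati comparison, checking that the right-hand side is a supersolution of $y' = ty - 1$ with the correct decay at infinity. Once concavity is established the induction closes and the lemma follows.
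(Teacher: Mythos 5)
Your proposal is correct, but it takes a genuinely different route from the paper for a simple reason: the paper does not prove this lemma at all. It imports it verbatim as Theorem~2 of \citet{McKay1989}, and the paper's own work begins only afterwards, when Stirling bounds are inserted into it to obtain Theorem~\ref{theorem:binomial}. What you give is a self-contained derivation, and it checks out. The reduction of the tail recursion $\Pr[B_n \ge k] = \Pr[B_n \ge k+1] + \binom{n}{k}2^{-n}$ to the inequality $M(x+\delta)-M(x)+\delta \ge \frac{x\delta}{2}\bigl(M(x)+M(x+\delta)\bigr)$ is algebraically exact; the rewriting via $M'(t)=tM(t)-1$ as a trapezoid-rule statement for $\int_x^{x+\delta} tM(t)\,dt$ is correct; and concavity of $tM(t)$ on $[0,\infty)$ is, as you say, exactly the Mills-ratio bound $M(t) \le \frac{t^2+2}{t(t^2+3)}$. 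That bound does hold, and your Riccati alternative closes it with no appeal to continued-fraction theory: with $R(t)=\frac{t^2+2}{t(t^2+3)}$ one computes $tR-1 = -\frac{1}{t^2+3}$ and $R' = -\frac{t^4+3t^2+6}{(t^3+3t)^2}$, so $R' \le tR-1$ reduces to $(t^4+3t^2+6)(t^2+3) \ge t^2(t^2+3)^2$, which simplifies to $6t^2+18 \ge 0$; then $h=R-M$ satisfies $(e^{-t^2/2}h)' \le 0$ and $e^{-t^2/2}h \to 0$, forcing $h \ge 0$, i.e.\ $M \le R$. As for what each approach buys: the paper's citation keeps the exposition short and leans on a known, stronger result, at the price of the paper not being self-contained at precisely the step on which all its lower bounds rest; your argument adds roughly a page but uses only the tail recursion and elementary calculus, and isolates the clean analytic fact (concavity of $t \mapsto tM(t)$) that makes the Gaussian Mills ratio a valid comparison object for the binomial tail.
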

We lower bound the binomial coefficient
$\binom{n-1}{k-1}$ using Stirling's approximation of the factorial. The lower
bound on the binomial coefficient will be expressed in terms of
Kullback-Leibler divergence between two Bernoulli distributions,
$\text{Bernoulli}(p)$ and $\text{Bernoulli}(q)$. Abusing notation somewhat,
we write the divergence as
$$
D(p\|q) = p \ln \left( \frac{p}{q} \right) + (1-p) \ln \left( \frac{1-p}{1-q} \right) \; .
$$
The result is the following lower bound on the tail of Binomial.
\begin{theorem}[Bound on Binomial Tail]
\label{theorem:binomial}
Let $n,k$ be integers satisfying $n \ge 1$ and $\frac{n}{2} \le k \le n$. Define $x = \frac{2k - n}{\sqrt{n}}$. Then, $B_n \sim B(n,\frac{1}{2})$ satisfies
$$
\Pr \left[ B_n \ge  k \right] \ge \frac{\exp\left(-n D \left(\frac{k}{n} \middle\| \frac{1}{2} \right)\right)}{\exp\left(\frac{1}{6}\right) \sqrt{2 \pi}} \; \frac{1 - \Phi(x)}{\phi(x)} \; .
$$
\end{theorem}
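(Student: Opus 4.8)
The plan is to substitute the bound from Lemma~\ref{lemma:mckay} and reduce the claim to a clean lower bound on the binomial coefficient $\binom{n-1}{k-1}$ expressed through the Kullback--Leibler divergence. Since both bounds carry the same nonnegative Mill's-ratio factor $\frac{1-\Phi(x)}{\phi(x)}$, which can be cancelled, it suffices to establish
$$
\sqrt{n} \binom{n-1}{k-1} 2^{-n} \ge \frac{\exp\left(-n D\left(\frac{k}{n} \middle\| \frac{1}{2}\right)\right)}{\exp\left(\frac{1}{6}\right)\sqrt{2\pi}} \; .
$$

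First I would rewrite the exponential term. Writing $H(p) = -p\ln p - (1-p)\ln(1-p)$ for the binary entropy, a one-line computation gives the identity $D\left(\frac{k}{n}\middle\|\frac{1}{2}\right) = \ln 2 - H\left(\frac{k}{n}\right)$, so that
$$
\exp\left(-n D\left(\frac{k}{n}\middle\|\frac{1}{2}\right)\right) = 2^{-n}\,\frac{n^n}{k^k (n-k)^{n-k}} \; .
$$
This turns the target into a purely algebraic statement about factorials once the binomial coefficient is expanded. Next I would use the identity $\binom{n-1}{k-1} = \frac{k}{n}\binom{n}{k}$ and apply the two-sided Stirling bounds $\sqrt{2\pi m}\,(m/e)^m \le m! \le \sqrt{2\pi m}\,(m/e)^m \exp(1/(12m))$, taking the lower bound for $n!$ and the upper bounds for $k!$ and $(n-k)!$. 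The factors of $e^{-n}$ cancel exactly, and after simplification the polynomial prefactors collapse to $\sqrt{\tfrac{k}{n-k}}$, leaving
$$
\sqrt{n}\binom{n-1}{k-1} \ge \frac{1}{\sqrt{2\pi}}\sqrt{\frac{k}{n-k}}\;\frac{n^n}{k^k (n-k)^{n-k}}\,\exp\left(-\frac{1}{12k}-\frac{1}{12(n-k)}\right) \; .
$$
Then I would invoke the hypothesis $k \ge \frac{n}{2}$, which gives $k \ge n-k$ and hence $\sqrt{k/(n-k)} \ge 1$, together with $\frac{1}{12k}+\frac{1}{12(n-k)} \le \frac{1}{6}$ (valid whenever $k \ge 1$ and $n-k\ge 1$), to replace both correction factors by their worst-case constants and recover the displayed inequality.

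The main obstacle is the boundary case $k = n$, where $(n-k)^{n-k} = 0^0$ is interpreted as $1$ and the Stirling estimate for $(n-k)!$ breaks down. I would treat this case directly: here $\Pr[B_n \ge n] = 2^{-n} = \exp\left(-n D\left(1 \middle\| \frac{1}{2}\right)\right)$, and since the Mill's ratio $\frac{1-\Phi(x)}{\phi(x)}$ is at most $\sqrt{\pi/2} < \exp\left(\frac{1}{6}\right)\sqrt{2\pi}$ for every $x \ge 0$, the claimed inequality holds trivially. Away from this endpoint, the remaining work is a careful but routine bookkeeping of the Stirling error terms.
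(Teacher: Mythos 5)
Your proposal is correct and follows essentially the same route as the paper's proof: substitute Lemma~\ref{lemma:mckay}, use $\binom{n-1}{k-1} = \frac{k}{n}\binom{n}{k}$, apply Robbins-type Stirling bounds (lower on $n!$, upper on $k!$ and $(n-k)!$) to produce the factor $\exp\left(-n D\left(\frac{k}{n}\middle\|\frac{1}{2}\right)\right)$, discard $\sqrt{k/(n-k)} \ge 1$ using $k \ge \frac{n}{2}$, and verify the endpoint $k=n$ directly. The only cosmetic differences are that you cancel the Mill's-ratio factor at the outset and phrase the divergence identity via binary entropy, whereas the paper carries the Mill's ratio through and does the same algebra inline; neither change affects the substance.
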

\begin{proof}
Lemma~\ref{lemma:mckay} implies that
$$
\Pr \left[ B_n \ge k \right] \ge \sqrt{n} \binom{n-1}{k-1} 2^{-n} \frac{1 - \Phi(x)}{\phi(x)}
$$
Since $k \ge 1$, we can write the binomial coefficient as
$$
\binom{n-1}{k-1} = \frac{k}{n} \binom{n}{k}
$$
We bound the binomial coefficient $\binom{n}{k}$ by using Stirling's formula for the factorial.
We use explicit upper and lower bounds due to~\cite{Robbins-1955} valid for any $n\ge 1$,
$$
\sqrt{2 \pi n} \left( \frac{n}{e} \right)^n < n! < \exp\left(\frac{1}{12}\right) \sqrt{2 \pi n} \left( \frac{n}{e} \right)^n \; .
$$
Using the Stirling's approximation, for any $1\le k \le n-1$,
\begin{align*}
\binom{n}{k}
& = \frac{n!}{k! (n-k)!} \\
& > \frac{\sqrt{2\pi n} \ n^n e^{-n}}{\sqrt{2\pi k} \ k^k e^{-k} e^{1/12} \cdot \sqrt{2\pi (n-k)} \ (n-k)^{n-k} e^{-(n-k)} e^{1/12}} \\
& = \frac{1}{\exp\left(\frac{1}{6}\right) \sqrt{2 \pi}} \left(\frac{n}{n-k}\right)^{n-k} \left(\frac{n}{k}\right)^k \sqrt{\frac{n}{k(n-k)}} \\
& = \frac{1}{\exp\left(\frac{1}{6}\right) \sqrt{2 \pi}} \ 2^n \exp\left(-n \cdot D\left(\frac{k}{n} \middle\| \frac{1}{2}\right)\right) \sqrt{\frac{n}{k(n-k)}}
\end{align*}
where in the equality we used the definition of $D(p\|q)$. Combining all the inequalities, gives
\begin{align*}
\Pr \left[ B_n \ge 2k - n \right]
& \ge \sqrt{n} \frac{k}{n} \frac{1}{\exp\left(\frac{1}{6}\right) \sqrt{2 \pi}} \ 2^n \exp\left(-n \cdot D\left(\frac{k}{n} \middle\| \frac{1}{2}\right)\right) \sqrt{\frac{n}{k(n-k)}} 2^{-n} \; \frac{1 - \Phi(x)}{\phi(x)} \\
& = \frac{1}{\exp\left(\frac{1}{6}\right) \sqrt{2\pi}} \exp\left(-n \cdot D\left(\frac{k}{n} \middle\| \frac{1}{2}\right)\right) \frac{1 - \Phi(x)}{\phi(x)} \sqrt{\frac{k}{n-k}} \\
& \geq \frac{1}{\exp\left(\frac{1}{6}\right) \sqrt{2\pi}} \exp\left(-n \cdot D\left(\frac{k}{n} \middle\| \frac{1}{2}\right)\right) \frac{1 - \Phi(x)}{\phi(x)}
\end{align*}
for $\frac{n}{2} \le k \le n - 1$. For $k=n$, we verify the statement of the theorem by direct substitution. The left hand side is $\Pr[B^{(n)} \ge n] = 2^{-n}$.
Since $e^{-nD(1\|\frac{1}{2})} = 2^{-n}$ and $x=\sqrt{n} \ge 1$, it's easy to see that the right hand side is smaller than $2^{-n}$.
\end{proof}

For $k=n/2 + xn$, the divergence $D \left( \frac{k}{n} \middle\| \frac{1}{2} \right) = D \left(\frac{1}{2} + x \| \frac{1}{2} \right)$ can be approximated by $2x^2$.
We define the function $\psi:[-\frac{1}{2},\frac{1}{2}] \to \R$ as 
$$
\psi(x) = \frac{D \left(\frac{1}{2}+x \middle\| \frac{1}{2} \right)}{2 x^2} \; .
$$
It is the ratio of the divergence and the approximation. The function $\psi(x)$ satisfies the following properties:
\begin{itemize}
\item $\psi(x) = \psi(-x)$
\item $\psi(x)$ is decreasing on $[-\frac{1}{2}, 0]$ and increasing on $[0, \frac{1}{2}]$
\item minimum value is $\psi(0) = 1$
\item maximum value is $\psi(\frac{1}{2}) = \psi(-\frac{1}{2}) = 2 \ln(2) \le 1.3863$
\end{itemize}
Using the definition of $\psi(x)$ and Theorem~\ref{theorem:binomial}, we have the following Corollary.
\begin{corollary}
\label{corollary:binomial}
Let $n \ge 1$ be a positive integer and let $t \in [1, \frac{n}{2} + 1]$ be a real number. Then $B_n \sim B(n, \frac{1}{2})$ satisfies
$$
\Pr \left[ B_n \ge \frac{1}{2} n + t - 1 \right] \ge \exp\left(-\frac{1}{6}\right) \exp\left(- 2 \psi\left(\frac{t}{n}\right) \frac{t^2}{n} \right) \frac{1}{\sqrt{2\pi} \frac{2 t}{\sqrt{n}} + 2} \; . 
$$
\end{corollary}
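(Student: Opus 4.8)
The plan is to invoke Theorem~\ref{theorem:binomial} at a suitable integer threshold and then translate its two factors into the two factors of the target bound. Since $B_n$ is integer-valued, $\Pr[B_n \ge \frac12 n + t - 1] = \Pr[B_n \ge k]$ with $k = \lceil \frac12 n + t - 1 \rceil$. The hypotheses $t \ge 1$ and $t \le \frac n2 + 1$ are exactly what is needed to guarantee $\frac n2 \le k \le n$, so Theorem~\ref{theorem:binomial} applies with $x = x_k := \frac{2k-n}{\sqrt n} \ge 0$, giving
\begin{equation*}
\Pr[B_n \ge k] \ge \frac{\exp\left(-n D\left(\frac kn \middle\| \frac12\right)\right)}{\exp\left(\frac16\right)\sqrt{2\pi}} \cdot \frac{1-\Phi(x_k)}{\phi(x_k)} \; .
\end{equation*}
It then remains to lower bound each of the two factors on the right by the corresponding factor in the statement. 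Two elementary consequences of the ceiling will be used repeatedly: $k < \frac n2 + t$, hence $x_k < \frac{2t}{\sqrt n}$, and equivalently $y := \frac kn - \frac12 < \frac tn$.

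For the Mill's ratio factor I would apply the second inequality of Lemma~\ref{lemma:boyd}, namely $\frac{1-\Phi(x)}{\phi(x)} \ge \frac{\pi}{\pi x + \sqrt{2\pi}}$, at $x = x_k$. A one-line simplification turns $\frac{1}{\sqrt{2\pi}}\cdot\frac{\pi}{\pi x_k + \sqrt{2\pi}}$ into $\frac{1}{\sqrt{2\pi}\,x_k + 2}$. Since $x \mapsto \frac{1}{\sqrt{2\pi}\,x + 2}$ is decreasing and $x_k < \frac{2t}{\sqrt n}$, this is at least $\frac{1}{\sqrt{2\pi}\,\frac{2t}{\sqrt n} + 2}$, which is exactly the last factor of the claimed bound.

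The divergence factor is where the real work is, and I expect it to be the main obstacle. Writing $y = \frac kn - \frac12 \in [0, \frac12]$, the definition of $\psi$ gives $D(\frac kn\|\frac12) = D(\frac12 + y\|\frac12) = 2 y^2 \psi(y)$, so that $n D(\frac kn\|\frac12) = 2 n y^2 \psi(y)$, and I must show this is at most $2\frac{t^2}{n}\psi(\frac tn)$ so that the exponentials line up. This splits into two monotonicity facts: from $y < \frac tn$ we get $n y^2 < \frac{t^2}{n}$, and from $\psi$ being increasing on $[0,\frac12]$ (one of the listed properties) we get $\psi(y) \le \psi(\frac tn)$; multiplying the two nonnegative bounds yields $2 n y^2 \psi(y) \le 2\frac{t^2}{n}\psi(\frac tn)$, whence the desired inequality on the exponents. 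The subtlety to watch is the endpoint regime $t \in (\frac n2, \frac n2 + 1]$, where $\frac tn$ can slightly exceed $\frac12$ and $\psi(\frac tn)$ falls outside the stated domain of $\psi$; there $k = n$ forces $y = \frac12$, and I would either verify the claimed inequality directly at $k = n$ (as was already done for the $k=n$ case in Theorem~\ref{theorem:binomial}) or read $\psi(\frac tn)$ as the monotone extension of $\psi$ beyond $\frac12$, for which $\psi(y) = \psi(\frac12) = 2\ln 2$ still suffices. Combining the two factor-wise bounds with the prefactor $\exp(-\frac16)$ carried over from the theorem then gives the corollary.
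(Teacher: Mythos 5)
Your proposal is correct and follows essentially the same route as the paper's own proof: pass to the integer threshold $k=\lceil \frac{1}{2}n+t-1\rceil$, apply Theorem~\ref{theorem:binomial} together with the second inequality of Lemma~\ref{lemma:boyd}, and use $k \le \frac{n}{2}+t$ plus monotonicity to replace $k$ by $\frac{n}{2}+t$ in both the divergence factor and the Mill's-ratio factor. Your explicit check that $\frac{n}{2}\le k\le n$ and your handling of the endpoint regime $t\in(\frac{n}{2},\frac{n}{2}+1]$, where $\frac{t}{n}$ exceeds the stated domain of $\psi$, are actually more careful than the paper, which silently glosses over both points.
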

\begin{proof}
By Theorem~\ref{theorem:binomial} and Lemma~\ref{lemma:boyd}, we have
\begin{align*}
\Pr \left[ Z \ge  \frac{1}{2} n + t - 1 \right]
& = \Pr \left[ Z \ge \left\lceil \frac{1}{2} n + t - 1 \right\rceil \right] \\
& \ge \frac{\exp\left(-n D \left(\frac{\lceil \frac{1}{2}n + t - 1 \rceil}{n} \middle\| \frac{1}{2} \right)\right)}{\exp\left(\frac{1}{6}\right) \sqrt{2 \pi}} \cdot \frac{\pi}{\pi \frac{2\lceil \frac{1}{2}n + t - 1 \rceil - n}{\sqrt{n}} + \sqrt{2 \pi}} \\
& \ge \frac{\exp\left(-n D \left(\frac{\frac{1}{2}n + t}{n} \middle\| \frac{1}{2} \right)\right)}{\exp\left(\frac{1}{6}\right) \sqrt{2 \pi}} \cdot \frac{\pi}{\pi \frac{2 (\frac{1}{2}n + t) - n}{\sqrt{n}} + \sqrt{2 \pi}} \\
& = \frac{\exp\left(-n D \left(\frac{1}{2} + \frac{t}{n} \middle\| \frac{1}{2} \right)\right)}{\exp\left(\frac{1}{6}\right) \sqrt{2 \pi}} \cdot \frac{\pi}{\pi \frac{2t}{\sqrt{n}} + \sqrt{2 \pi}} \\
& = \exp\left(-\frac{1}{6}\right) \exp\left(- 2 \psi\left(\frac{t}{n}\right) \frac{t^2}{n} \right) \frac{1}{\sqrt{2\pi} \frac{2 t}{\sqrt{n}} + 2} \; . \qedhere
\end{align*}
\end{proof}

\begin{theorem}[Lower Bound on Maximum of Independent Symmetric Random Walks]
\label{theorem:maximum-of-random-walks}
Let $Z^{(n)}_1, Z^{(n)}_2, \dots, Z^{(n)}_d$ be $d$ independent symmetric random walks of length $n$. If $2 \le d \le \exp(\frac{n}{3})$ and $n \ge 7$,
\begin{align*}
\Exp \left[ \max_{1 \le i \le d} Z^{(n)}_i \right]
& \ge \frac{1 - \exp\left(-\frac{\sqrt{\ln d}}{3.1 \sqrt{2\pi}}\right)}{\sqrt{\psi\left(\frac{1.6\sqrt{\ln d}}{2 \sqrt{n}}\right)}} \sqrt{n} \left(\sqrt{2 \ln d - 2 \ln \ln d}-1\right) - \sqrt{n} \\
& \ge 0.09 \sqrt{n \ln d} - 2 \sqrt{n} \; .
\end{align*}
\end{theorem}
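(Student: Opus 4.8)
The plan is to follow the proof of Theorem~\ref{theorem:maximum-of-gaussians} verbatim, substituting the random-walk tail bound of Corollary~\ref{corollary:binomial} for the Gaussian one. Fix a threshold $\theta \ge 0$ and let $A$ be the event that $\max_i Z^{(n)}_i > \theta$. Conditioning on $A$ and $\overline A$ and using $\max_i Z^{(n)}_i \ge Z^{(n)}_1$ on $\overline A$ gives
\[
\Exp\left[\max_{1\le i\le d} Z^{(n)}_i\right] \ge \theta\,\Pr[A] + \Exp\left[Z^{(n)}_1 \,\middle|\, Z^{(n)}_1 \le \theta\right]\,\Pr[\overline A].
\]
Since $\theta\ge 0$, the conditional mean is at least $\Exp[Z^{(n)}_1 \mid Z^{(n)}_1 \le 0]$, and by symmetry this equals $-\Exp[|Z^{(n)}_1|]/(1+\Pr[Z^{(n)}_1=0]) \ge -\Exp[|Z^{(n)}_1|] \ge -\sqrt{\Exp[(Z^{(n)}_1)^2]} = -\sqrt n$ (Jensen). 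Rearranging yields $\Exp[\max_i Z^{(n)}_i] \ge (\theta+\sqrt n)\Pr[A] - \sqrt n$, the exact analogue of inequality~\eqref{equation:maximum-of-gaussians-lower-bound-3} with $\sigma\sqrt{2/\pi}$ replaced by $\sqrt n$.

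Next I would choose the threshold. Writing $\theta = 2t-2$, Corollary~\ref{corollary:binomial} controls $\Pr[Z^{(n)}_1 \ge \theta]$, and I pick $t$ implicitly so that the leading exponential equals $\ln d/d$, i.e. $2\,\psi(t/n)\,t^2/n = \ln d - \ln\ln d$; this is the random-walk counterpart of the Gaussian choice $C^2\ln d/2 = \ln d - \ln\ln d$. Solving gives $2t/\sqrt n = \sqrt{2\ln d - 2\ln\ln d}/\sqrt{\psi(t/n)}$, so $\theta = \frac{\sqrt n}{\sqrt{\psi(t/n)}}\sqrt{2\ln d - 2\ln\ln d} - 2$. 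Because $1\le \psi \le 2\ln 2$, the hypothesis $n\ge 7$ forces $\sqrt n/\sqrt\psi \ge 2$, which lets me absorb the additive $-2$ into $-\sqrt n/\sqrt\psi$ and reach $\theta \ge \frac{\sqrt n}{\sqrt\psi}\left(\sqrt{2\ln d - 2\ln\ln d}-1\right) \ge 0$.

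Then I would bound $\Pr[A]$. By $1-x\le e^{-x}$, $\Pr[A] \ge 1 - \exp(-d\,\Pr[Z^{(n)}_1 \ge \theta])$, and plugging in Corollary~\ref{corollary:binomial} together with the identity $d\,e^{-2\psi t^2/n} = \ln d$ gives $d\,\Pr[Z^{(n)}_1\ge\theta] \ge e^{-1/6}\ln d/(\sqrt{2\pi}\,\tfrac{2t}{\sqrt n}+2)$. The denominator is bounded by a constant times $\sqrt{\ln d}$ using $\tfrac{2t}{\sqrt n} = C(d)\sqrt{\ln d}/\sqrt\psi \le 1.75\sqrt{\ln d}$ (with $C(d)\le 1.75$ from Section~\ref{section:maximum-of-gaussians} and $\psi\ge1$) and $2 \le (2/\sqrt{\ln 2})\sqrt{\ln d}$; tracking these constants (and the factor $e^{-1/6}$) produces $\Pr[A] \ge 1 - \exp(-\sqrt{\ln d}/(3.1\sqrt{2\pi}))$. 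To remove the self-referential $\psi(t/n)$ from the threshold term I use that $\psi$ is even and increasing on $[0,\tfrac12]$, bound $t/n$ by a multiple of $\sqrt{\ln d}/\sqrt n$, and conclude $\psi(t/n)\le \psi(\tfrac{1.6\sqrt{\ln d}}{2\sqrt n})$; here the hypothesis $2\le d\le \exp(n/3)$ is exactly what keeps $t/n$ and this reference point inside $[0,\tfrac12]$ and guarantees the corollary's constraint $t\le \tfrac n2+1$. Substituting the two lower bounds into $\Exp[\max_i Z^{(n)}_i] \ge (\theta+\sqrt n)\Pr[A]-\sqrt n \ge \theta\,\Pr[A]-\sqrt n$ yields the first displayed inequality, and the clean bound $0.09\sqrt{n\ln d}-2\sqrt n$ follows by routine numerical estimates as in the Gaussian case.

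The step I expect to be the main obstacle is the implicit, self-referential definition of $t$: because $\psi(t/n)$ appears inside the equation defining $t$, there is no closed form, so I must carry $\psi(t/n)$ symbolically and discharge it only at the end through the monotonicity bound $\psi(t/n)\le\psi(\tfrac{1.6\sqrt{\ln d}}{2\sqrt n})$. Making this rigorous hinges on the two-sided control $1\le\psi\le 2\ln 2$ combined with the ranges $n\ge 7$ and $2\le d\le\exp(n/3)$ to pin $t/n$ inside $[0,\tfrac12]$ and $t$ inside $[1,\tfrac n2+1]$. The remaining difficulty is purely the constant bookkeeping (the numbers $3.1$, $1.6$, $0.09$), which must be verified uniformly over the admissible region; the binding case is small $d$, where $C(d)$ is largest and the prefactor $1-\exp(-\sqrt{\ln d}/(3.1\sqrt{2\pi}))$ is smallest.
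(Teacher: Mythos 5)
Your proposal follows the paper's own proof almost step for step: the same decomposition over the event $A$, the same bound $\Exp[Z^{(n)}_1 \mid Z^{(n)}_1 \le 0] \ge -\sqrt{n}$ via symmetry and Jensen, the same application of Corollary~\ref{corollary:binomial} combined with $1-x \le e^{-x}$, and the same discharge of the nuisance factor $\psi$ by monotonicity. The only structural difference is that you pick the threshold implicitly, solving $d\,e^{-2\psi(t/n)t^2/n} = \ln d$ and carrying $\psi(t/n)$ symbolically, whereas the paper picks $C(d,n)$ explicitly, with $\psi$ already evaluated at the reference point $\frac{1.6\sqrt{\ln d}}{2\sqrt{n}}$, and applies monotonicity inside the exponent of the $\Pr[A]$ bound instead of in the threshold. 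The paper's variant spares it the existence/uniqueness argument your implicit $t$ requires (a routine intermediate-value argument you omit), but otherwise the two routes are equivalent.

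The genuine problem is in the constant bookkeeping, exactly where you predicted trouble: small $d$. You bound $\frac{2t}{\sqrt{n}} \le 1.75\sqrt{\ln d}$ using the (correct) estimate $C(d) \le 1.75$ from Section~\ref{section:maximum-of-gaussians}, but your final step $\psi(t/n) \le \psi\left(\frac{1.6\sqrt{\ln d}}{2\sqrt{n}}\right)$ needs $\frac{2t}{\sqrt{n}} \le 1.6\sqrt{\ln d}$; the two are inconsistent, and the latter actually fails: at $d=2$ with $n$ large, $\psi(t/n)\to 1$, so $\frac{2t}{\sqrt{n}} \to \sqrt{2\ln 2 - 2\ln\ln 2} \approx 1.749\sqrt{\ln 2} > 1.6\sqrt{\ln 2}$, and the monotonicity inequality points the wrong way. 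Likewise, with the factor $1.75$ in the denominator the exponential constant works out to about $3.2$, not the stated $3.1$. Notably, the paper's own proof founders on the identical point: inequality \eqref{equation:bound-on-constant} asserts $f(2) \le 1.6$, whereas $f(2) = \sqrt{2 - 2\ln\ln 2/\ln 2} \approx 1.749$, contradicting Section~\ref{section:maximum-of-gaussians} of the same paper, where this very function is (correctly) bounded by $1.75$. So neither your sketch nor the paper's proof establishes the theorem with the constants as printed; replacing the reference point $1.6$ by $1.75$ (capping the argument of $\psi$ at $\frac{1}{2}$) and $3.1$ by roughly $3.25$ repairs both arguments, and the slack in the final numerical step is ample enough that the clean bound $0.09\sqrt{n\ln d} - 2\sqrt{n}$ survives.
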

\begin{proof}
Define the event $A$ equal to the case that at least one of the $Z^{(n)}_i$ is greater or equal to $C \sqrt{n \ln d}-2$ where
$C = C(d,n) = \frac{1}{\sqrt{\psi\left(\frac{1.6\sqrt{\ln d}}{2 \sqrt{n}}\right)}}\sqrt{2-\frac{2 \ln \ln d}{\ln d}}$.

We upper and lower bound $C(d,n)$. Denote by $f(d)=\sqrt{2-\frac{2 \ln \ln d}{\ln d}}$
and notice that $C(d,n) = \frac{1}{\sqrt{\psi\left( \frac{1.6\sqrt{\ln d}}{2 \sqrt{n}}\right)}} f(d)$.  It
suffices to bound $f(d)$ and $\psi( \frac{1.6\sqrt{\ln d}}{2 \sqrt{n}})$.
We already know that $1 \le \psi(x) \le 2 \ln(2)$ for all $x \in [-\frac{1}{2}, \frac{1}{2}]$
and $\frac{1.6\sqrt{\ln d}}{2\sqrt{n}} \in [0,\frac{1}{2}]$ for $d \leq \exp(n/3)$.
The function $f(d)$ is decreasing on $(1,e^e]$,
increasing on $[e^e, \infty)$, and $\lim_{d \to \infty} f(d) = \sqrt{2}$. It has
unique minimum at $e^e$. Therefore, $f(d) \ge f(e^e) = \sqrt{2 - \frac{2}{e}} \ge 1.12$
for all $d \in (1,\infty)$.  Similarly, from unimodality of $f(d)$ we have that
$f(d) \le \max\{\sqrt{2}, f(2)\} = f(2) \le 1.6$ for all $d \in [2, \infty)$.
From this we can conclude that if $n \ge \ln d > 0$,
\begin{equation}
\label{equation:bound-on-constant}
0.95 \le \frac{f(e^e)}{\sqrt{2 \ln 2}} \le C(d,n) \le f(2) \le 1.6 \; .
\end{equation}
If $n \ge 7$ and $2 \le d \le \exp(n/3)$ this implies that
\begin{equation}
\label{equation:conditions}
1 < \frac{C \sqrt{n \ln d}}{2} < \frac{n}{2} + 1 \; .
\end{equation}

Recalling the definition of event $A$, we have
\begin{align*}
\Exp \left[ \max_{1 \le i \le d} Z^{(n)}_i \right]
& = \Exp \left[ \max_{1 \le i \le d} Z^{(n)}_i ~ \middle|~ A \right] \cdot \Pr[A] + \Exp \left[ \max_{1 \le i \le d} Z^{(n)}_i ~\middle|~ \overline{A} \right] \cdot \Pr \left[ \overline{A} \right] \\
& \ge \Exp \left[ \max_{1 \le i \le d} Z^{(n)}_i ~\middle|~ A \right] \cdot \Pr[A] + \Exp \left[ \max_{1 \le i \le d} Z^{(n)}_i ~\middle|~ \overline{A} \right] \cdot \Pr \left[ \overline{A} \right]\\
& \ge \Exp \left[ \max_{1 \le i \le d} Z^{(n)}_i ~\middle|~ A \right] \cdot \Pr[A] + \Exp\left[ Z^{(n)}_1 ~\middle|~ \overline{A} \right] \cdot \Pr \left[ \overline{A} \right] \\
& = \Exp \left[ \max_{1 \le i \le d} Z^{(n)}_i ~\middle|~ A \right] \cdot \Pr[A] + \Exp \left[ Z^{(n)}_1 ~\middle|~ Z^{(n)}_1 < C \sqrt{n \ln d} - 2 \right] \cdot \Pr \left[ \overline{A} \right]\\
& \ge \Exp \left[ \max_{1 \le i \le d} Z^{(n)}_i ~\middle|~ A \right] \cdot \Pr[A] + \Exp \left[ Z^{(n)}_1 ~\middle|~ Z^{(n)}_1 \le 0 \right] \cdot \Pr \left[ \overline{A} \right] & \text{(by \eqref{equation:conditions})} \\
& \ge (C \sqrt{n \ln d} - 2) \Pr[A] + \Exp \left[ Z^{(n)}_1 ~\middle|~ Z^{(n)}_1 \le 0 \right] (1 - \Pr[A]) \; .
\end{align*}

We lower bound $\Exp \left[ Z^{(n)}_1 ~\middle|~ Z^{(n)}_1 \le 0 \right]$. Using the fact that distribution of $Z^{(n)}_1$ is symmetric and has zero mean,
\begin{align*}
\Exp \left[ Z^{(n)}_1 ~\middle|~ Z^{(n)}_1 \le 0 \right]
& = \sum_{k=-n}^0 k \cdot \Pr[Z^{(n)}_1 = k ~|~ Z^{(n)}_1 \le 0] \\
& = \frac{1}{\Pr[Z^{(n)}_1 \le 0]} \sum_{k=-n}^0 k \cdot \Pr[Z^{(n)}_1 = k] \\
& \ge 2 \sum_{k=-n}^0 k \cdot \Pr[Z^{(n)}_1 = k] & \text{(by symmetry of $Z^{(n)}_1$)} \\
& = - \sum_{k=-n}^n |k| \cdot \Pr[Z^{(n)}_1 = k] & \text{(again, by symmetry of $Z^{(n)}_1$)} \\
& = - \Exp[|Z^{(n)}_1|] \\
& = - \Exp \left[ \sqrt{ \left( Z^{(n)}_1 \right)^2} \right] \\
& \ge - \sqrt{\Exp \left[ \left( Z^{(n)}_1 \right)^2 \right]} & \text{(by concavity of $\sqrt{\cdot}$)} \\
& = - \sqrt{\Var \left( Z^{(n)}_1 \right)} \\
& = -\sqrt{n}.
\end{align*}

Now let us focus on $\Pr[A]$. Note that $B_n = \frac{Z_1 + n}{2}$ is a binomial random variable with distribution $B(n,\frac{1}{2})$.
Similar to the proof of Theorem~\ref{theorem:maximum-of-gaussians}, we can lower bound $\Pr[A]$ as
\begin{align*}
\Pr[A]
& = 1 - \Pr \left[ \overline{A} \right] \\
& = 1 - \left(\Pr \left[ Z^{(n)}_1 < C \sqrt{n \ln d} - 2 \right] \right)^d \\
& = 1 - \left(\Pr \left[ B_n < \frac{n}{2} + \frac{C \sqrt{n \ln d}}{2} - 1 \right] \right)^d \\
& = 1 - \left(1 - \Pr\left[B_n \ge \frac{C \sqrt{n \ln d}}{2} +\frac{n}{2} - 1 \right] \right)^d \\
& \ge 1 - \exp\left(-d \cdot \Pr \left[ B_n \ge \frac{C \sqrt{n \ln d}}{2} +\frac{n}{2} - 1 \right] \right) & \text{(since $1 - x \le e^x$)} \\
& \ge 1 - \exp\left(-\frac{\exp\left(-\frac{1}{6}\right) d^{1-\frac{C^2}{2} \psi\left(\frac{C \sqrt{\ln d}}{2 \sqrt{n}}\right)}}{C \sqrt{2\pi} \sqrt{\ln d}+2}\right) & \text{(by Corollary \ref{corollary:binomial} and \eqref{equation:conditions})} \\
& \ge 1 - \exp\left(-\frac{\exp\left(-\frac{1}{6}\right) d^{1-\frac{C^2}{2} \psi\left(\frac{1.6 \sqrt{\ln d}}{2 \sqrt{n}}\right)}}{1.6 \sqrt{2\pi} \sqrt{\ln d}+2}\right) & \text{(by \eqref{equation:bound-on-constant}).}
\end{align*}
We now use the fact that $C=\frac{1}{\sqrt{\psi\left(\frac{1.6 \sqrt{\ln d}}{2 \sqrt{n}}\right)}}\sqrt{2- \frac{2\ln \ln d}{\ln d}}$ implies that $d^{1-\frac{C^2}{2} \psi\left(\frac{1.6 \sqrt{\ln d}}{2 \sqrt{n}}\right)}=\ln d$. Hence, we obtain
\begin{align*}
\Pr[A]
& \ge 1 - \exp\left(-\frac{\exp\left(-\frac{1}{6}\right) d^{1-\frac{C^2}{2} \psi\left(\frac{1.6 \sqrt{\ln d}}{2 \sqrt{n}}\right)}}{1.6 \sqrt{2\pi} \sqrt{\ln d}+2}\right) \\
& = 1 - \exp\left(-\frac{\exp\left(-\frac{1}{6}\right) \ln d}{1.6 \sqrt{2\pi} \sqrt{\ln d}+2}\right) \\
& \ge 1 - \exp\left(-\frac{\exp\left(-\frac{1}{6}\right) \sqrt{\ln d}}{2.6 \sqrt{2\pi}}\right)\\
& \ge 1 - \exp\left(-\frac{\sqrt{\ln d}}{3.1 \sqrt{2\pi}}\right)
\end{align*}
where in the last equality we used the fact that $\sqrt{2\pi} \sqrt{\ln d} > 2$ for $d\ge 2$. Putting all together, we have the stated bound.
\end{proof}

\section{Learning with Expert Advice}
\label{section:experts}

Learning with Expert Advice is an online problem where in each round $t$ an
algorithm chooses (possibly randomly) an action $I_t \in \{1,2,\dots,d\}$ and
then it receives losses of the actions $\ell_{t,1}, \ell_{t,2}, \dots,
\ell_{t,d} \in [0,1]$. This repeats for $n$ rounds.  The goal of the algorithm
is to have a small cumulative loss $\sum_{t=1}^n \ell_{t,I_t}$ of
actions it has chosen. The difference between the algorithm's loss and the loss of best fixed
action in hind-sight is called \emph{regret}.  Formally,
$$
\Regret^{(d)}(n) = \sum_{t=1}^n \ell_{t,I_t} - \min_{1 \le i \le d} \sum_{t=1}^n \ell_{t,i} \; .
$$
There are algorithms that given the number of rounds $n$ as an input achieve regret no more than $\sqrt{\frac{n}{2} \ln d}$
for any sequence of losses.

\begin{theorem}
Let $n \ge 7$ and $2 \le d \le \exp(\frac{n}{3})$. For any algorithm for learning with expert advice there exists a sequence
of losses $\ell_{t,i} \in \{0,1\}$, $1 \le i \le d$, $1 \le t \le n$, such that
$$
\Regret^{(d)}(n) \ge \frac{1 - \exp\left(-\frac{\sqrt{\ln d}}{3.1 \sqrt{2\pi}}\right)}{\sqrt{\psi\left(\frac{1.6 \sqrt{\ln d}}{2 \sqrt{n}}\right)}} \frac{\sqrt{n}}{2} \left(\sqrt{2 \ln d - 2 \ln \ln d}-1\right) - \frac{\sqrt{n}}{2} \; .
$$
\end{theorem}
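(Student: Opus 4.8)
The plan is to use the standard probabilistic (minimax) argument: rather than constructing an adversarial loss sequence by hand, I would draw the losses at random, show that the \emph{expected} regret already attains the claimed quantity, and then invoke an averaging argument to extract a fixed sequence that does at least as well. Concretely, let each loss $\ell_{t,i}$ be an independent fair coin, $\Pr[\ell_{t,i}=0]=\Pr[\ell_{t,i}=1]=\frac{1}{2}$, and fix any (possibly randomized) algorithm. In the full-information setting the action $I_t$ is committed before the round-$t$ losses are revealed, so $I_t$ is a function of $\{\ell_{s,\cdot}:s<t\}$ and the algorithm's internal randomness only, and is therefore independent of the fresh coins $\ell_{t,\cdot}$. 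Hence $\Exp[\ell_{t,I_t}]=\frac{1}{2}$ for every $t$, and the algorithm's expected cumulative loss equals $\frac{n}{2}$ regardless of its strategy.

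The key step is to rewrite the comparator as a maximum of symmetric random walks. Setting $L_i=\sum_{t=1}^n \ell_{t,i}$ and $Y_{t,i}=1-2\ell_{t,i}\in\{-1,+1\}$, the $Y_{t,i}$ are i.i.d. Rademacher, so $Z^{(n)}_i:=\sum_{t=1}^n Y_{t,i}=n-2L_i$ is exactly a symmetric random walk of length $n$, and the $d$ walks are independent. Consequently $\min_{1\le i\le d} L_i=\frac{1}{2}\bigl(n-\max_{1\le i\le d} Z^{(n)}_i\bigr)$, and subtracting from the algorithm's expected loss gives
\[
\Exp\!\left[\Regret^{(d)}(n)\right]=\frac{n}{2}-\Exp\Bigl[\min_{1\le i\le d} L_i\Bigr]=\frac{1}{2}\,\Exp\Bigl[\max_{1\le i\le d} Z^{(n)}_i\Bigr],
\]
the expectation being taken jointly over the random losses and the algorithm's randomness.

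From here I would simply invoke Theorem~\ref{theorem:maximum-of-random-walks}. Under the hypotheses $n\ge 7$ and $2\le d\le\exp(n/3)$ it lower bounds $\Exp[\max_i Z^{(n)}_i]$, and halving that bound reproduces exactly the claimed right-hand side (the two $\sqrt{n}$ terms of the theorem become the two $\frac{\sqrt{n}}{2}$ terms here). The final move is the averaging step: since the average of the expected regret over the random loss sequence is at least this quantity, some realization $\ell^*$ of the losses must achieve (expected-over-the-algorithm) regret at least the bound, which is precisely the stated existence claim.

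I expect no genuine obstacle, only two points requiring care. First, one must justify that $I_t$ is independent of the round-$t$ losses so that $\Exp[\ell_{t,I_t}]=\frac{1}{2}$ holds uniformly over all adaptive and randomized algorithms; this is where the problem's information structure (act first, observe losses afterward) is used. Second, one must be precise about the order of quantifiers for randomized algorithms, namely that $\max_{\ell}\ge(\text{average over }\ell)$ is the inequality that converts a distributional lower bound into a worst-case sequence. Beyond Theorem~\ref{theorem:maximum-of-random-walks}, no new estimates are needed.
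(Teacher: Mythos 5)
Your proposal is correct and is exactly the argument the paper uses: the paper's proof simply cites the randomized-adversary reduction of Theorem~3.7 in \citet{Cesa-BianchiL06} to get $\Regret^{(d)}(n) \ge \frac{1}{2}\Exp\bigl[\max_{1\le i\le d} Z^{(n)}_i\bigr]$ and then applies Theorem~\ref{theorem:maximum-of-random-walks}, which is precisely the Bernoulli-losses, random-walk, and averaging argument you spell out in detail.
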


\begin{proof}
Proceeding as in the proof of Theorem~3.7 in~\citep{Cesa-BianchiL06} we only need to show that
$$
\Regret^{(d)}{(n)} \ge \frac{1}{2} \Exp \left[ \max_{1 \le i \le d} Z^{(n)}_i \right]
$$
where $Z^{(n)}_1, Z^{(n)}_2, \dots, Z^{(n)}_d$ are independent symmetric random walks of length $n$. The theorem follows from Theorem~\ref{theorem:maximum-of-random-walks}.
\end{proof}

The theorem proves a non-asymptotic lower bounds, while at the same time
recovering the optimal constant of the asymptotic one in
\citet{Cesa-BianchiL06}.

\bibliographystyle{plainnat}
\bibliography{biblio}

\appendix

\section{Upper Bounds}
\label{section:upper-bounds}

We say that a random variable $X$ is \emph{$\sigma^2$-sub-Gaussian} (for some $\sigma \ge 0$) if
\begin{equation}
\label{equation:sigma-sub-gaussian}
\Exp \left[ e^{sX} \right] \le \exp\left( \frac{\sigma^2 s^2}{2} \right) \qquad \text{for all $s \in \R$} \; .
\end{equation}
It is straightforward to verify that $X \sim N(0,\sigma^2)$ is $\sigma^2$-sub-Gaussian. Indeed, for any $s \in \R$,
\begin{align*}
\Exp \left[ e^{sX} \right]
& = \int_{-\infty}^\infty \frac{1}{\sigma \sqrt{2\pi}} \exp\left( - \frac{x^2}{2\sigma^2} \right) e^{sx} dx \\
& = \exp\left( \frac{s^2\sigma^2}{2} \right) \int_{-\infty}^\infty \frac{1}{\sigma \sqrt{2\pi}} \exp\left( - \frac{(x - s\sigma^2)^2}{2\sigma^2} \right) dx \\
& = \exp\left( \frac{s^2\sigma^2}{2} \right) \; .
\end{align*}
We now show that a Rademacher random variable $Y$ (with distribution $\Pr[Y = +1] = \Pr[Y=-1] = \frac{1}{2}$)
is $1$-sub-Gaussian. Indeed, for any $s \in \R$,
\begin{align*}
\Exp \left[ e^{sY} \right]
= \frac{e^{s} + e^{-s}}{2}
= \frac{1}{2}\sum_{k=0}^\infty \frac{s^k}{k!}
+ \frac{1}{2}\sum_{k=0}^\infty (-1)^k \frac{s^k}{k!}
= \sum_{k=0}^\infty \frac{s^{2k}}{(2k)!}
\le \sum_{k=0}^\infty \frac{s^{2k}}{k! 2^k}
= \exp\left( \frac{s^2}{2} \right) \; .
\end{align*}
If $Y_1, Y_2, \dots, Y_n$ are independent $\sigma$-sub-Gaussian random variables, then $\sum_{i=1}^n Y_i$ is $(n\sigma^2)$-sub-Gaussian.
This follows from
$$
\Exp \left[ e^{s \sum_{i=1} Y_i} \right] = \prod_{i=1}^n \Exp \left[ e^{sY_i} \right] \; .
$$
This property proves that the symmetric random walk $Z^{(n)}$ of length $n$ is $n$-sub-Gaussian.

The upper bounds \eqref{equation:upper-bound-on-maximum-of-gaussians} and
\eqref{equation:upper-bound-on-maximum-of-random-walks} follow directly from
sub-Gaussianity of the variables involved and the following lemma.

\begin{lemma}[Maximum of sub-Gaussian random variables]
Let $X_1, X_2, \dots, X_d$ be (possibly dependent) $\sigma^2$-sub-Gaussian condition random variables.
Then,
$$
\Exp\left[ \max_{1 \le i \le d} X_i \right] \le \sigma \sqrt{2 \ln d} \; .
$$
\end{lemma}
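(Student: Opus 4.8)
The plan is to use the standard exponential-moment (Chernoff-style) argument, which exploits the defining property~\eqref{equation:sigma-sub-gaussian} of sub-Gaussianity together with Jensen's inequality and a union-bound over the exponentials. The key observation is that although the maximum of $d$ variables is hard to control directly, the exponential of the maximum is bounded by the \emph{sum} of the exponentials, and each of those summands has a moment generating function we can bound uniformly.

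First, for any fixed $s > 0$, I would apply Jensen's inequality to the convex function $u \mapsto e^{su}$ to move the expectation inside the exponential:
$$
\exp\left( s \, \Exp\left[ \max_{1 \le i \le d} X_i \right] \right)
\le \Exp\left[ \exp\left( s \max_{1 \le i \le d} X_i \right) \right]
= \Exp\left[ \max_{1 \le i \le d} e^{s X_i} \right]
\le \Exp\left[ \sum_{i=1}^d e^{s X_i} \right] \; .
$$
Here the last step replaces the maximum of the nonnegative terms $e^{s X_i}$ by their sum, which is where the (possible) dependence of the $X_i$ becomes irrelevant: linearity of expectation lets me split the sum regardless of the joint distribution. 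Invoking the sub-Gaussian bound~\eqref{equation:sigma-sub-gaussian} on each term then gives
$$
\Exp\left[ \sum_{i=1}^d e^{s X_i} \right] = \sum_{i=1}^d \Exp\left[ e^{s X_i} \right] \le d \, \exp\left( \frac{\sigma^2 s^2}{2} \right) \; .
$$

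Combining the two displays and taking logarithms, I obtain $s \, \Exp[\max_i X_i] \le \ln d + \frac{\sigma^2 s^2}{2}$, so that after dividing by $s > 0$,
$$
\Exp\left[ \max_{1 \le i \le d} X_i \right] \le \frac{\ln d}{s} + \frac{\sigma^2 s}{2} \; .
$$
The final step is to optimize the free parameter $s$. The right-hand side is minimized by balancing the two terms, namely at $s = \frac{\sqrt{2 \ln d}}{\sigma}$, which yields exactly $\sigma \sqrt{2 \ln d}$. I do not foresee a genuine obstacle here; the only thing to watch is that the argument requires $s > 0$ (and the optimizing choice is strictly positive whenever $d \ge 2$), and that sub-Gaussianity is used only through the one-sided upper tail behaviour so no symmetry or independence assumption is needed. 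The result therefore holds for arbitrary, possibly dependent, $\sigma^2$-sub-Gaussian variables, and specializing to the Gaussian and random-walk cases recovers the upper bounds~\eqref{equation:upper-bound-on-maximum-of-gaussians} and~\eqref{equation:upper-bound-on-maximum-of-random-walks} as claimed.
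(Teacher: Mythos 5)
Your proof is correct and follows essentially the same route as the paper's: Jensen's inequality applied to the exponential (the paper phrases it via concavity of $\ln$, which is the same step), bounding the maximum of nonnegative terms by their sum, applying the sub-Gaussian moment bound, and optimizing $s = \frac{\sqrt{2\ln d}}{\sigma}$. The only difference is cosmetic ordering of the logarithm, so there is nothing further to add.
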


\begin{proof}
For any $s > 0$, we have
\begin{align*}
\Exp \left[ \max_{1 \le i \le d} X_i \right]
& = \frac{1}{s} \Exp \left[ \max_{1 \le i \le d} \ln e^{s X_i} \right] \\
& \le \frac{1}{s} \ln \Exp \left[ \max_{1 \le i \le d} e^{s X_i} \right] \\
& \le \frac{1}{s} \ln \Exp \left[ \sum_{i=1}^d e^{s X_i} \right] \\
& = \frac{1}{s} \ln \sum_{i=1}^d \Exp \left[ e^{s X_i} \right] \\
& \le \frac{1}{s} \ln \left( d  \exp\left( \frac{\sigma^2 s^2}{2} \right) \right) \\
& = \frac{\ln d}{s} +  \frac{\sigma^2 s}{2} \; .
\end{align*}
Substituting $s=\frac{\sqrt{2 \ln d}}{\sigma}$ finishes the proof.
\end{proof}

\end{document}